\documentclass[10pt,twocolumn]{article}
\usepackage[margin=0.75in,columnsep=0.25in]{geometry}
\usepackage{amsmath,amssymb}
\usepackage{booktabs}
\usepackage{graphicx}
\usepackage{hyperref}
\usepackage{natbib}
\usepackage{xcolor}
\usepackage{titlesec}
\usepackage{amsthm}
\usepackage{algorithm}
\usepackage{algpseudocode}
\titlespacing*{\section}{0pt}{1.2ex plus 0.5ex minus 0.2ex}{0.8ex plus 0.2ex}
\titlespacing*{\subsection}{0pt}{1.0ex plus 0.4ex minus 0.2ex}{0.6ex plus 0.2ex}

\newtheorem{proposition}{Proposition}
\newtheorem{definition}{Definition}

\title{Deterministic Event-Graph Substrates as World\\
Models for Counterfactual Reasoning}

\author{%
  Fabio Rovai \\
  Tesseract Academy \\
  \texttt{fabio@thetesseractacademy.com}
}

\date{}

\begin{document}
\maketitle

\begin{abstract}
We study a class of world models for agentic systems that represent
state as an append-only log of typed RDF triples and answer
counterfactual queries by forking the log at a chosen tick under a
structured intervention vocabulary. We refer to this class as
\emph{event-graph substrates}. Substrates are inspectable at the
level of individual triples, support exact counterfactuals over
arbitrary interventions on the typed state, and transfer across
domains without learned components.

We make three contributions. First, we present a formal definition
of event-graph substrates with deterministic replay and intervention
semantics, and characterize the conditions under which counterfactual
queries reduce to graph-theoretic operations on the observed event
log. We prove a duality between explanatory queries (``which observed
event caused $E$?'') and counterfactual queries (``which observed
events would not occur if object $X$ were absent?''), showing that
under closed-event assumptions both are answered by the same
causal-ancestor traversal. Second, we evaluate a 1,400-line
CLEVRER-DSL interpreter atop a domain-agnostic substrate runtime on
CLEVRER \citep{yi2020clevrer}, the
canonical video causal-reasoning benchmark, at full validation scale
(n=75{,}618 questions). The substrate exceeds the published
symbolic-oracle baseline (NS-DR \citep{yi2020clevrer}) by 9.89
percentage points on descriptive, 20.26 on explanatory per-question,
17.65 on counterfactual per-question, and 0.80 on predictive
per-question. Against the parametric attention-based baseline ALOE
\citep{ding2021aloe} the substrate exceeds on descriptive and
explanatory per-question but lags on predictive and counterfactual
per-question, where ALOE's learned dynamics distribution provides
an advantage that the substrate's closed-form kinematic projection
does not match. Third, we
introduce twin-EventLog, a 500-specification Park-canonical
Smallville counterfactual benchmark for evaluating agent memory
consistency under intervention. On this benchmark the substrate
exceeds Llama-3.1-8B prompted with full context by 18.80
percentage points on joint accuracy and exceeds a
Park/Concordia-style LLM-driven simulator
\citep{park2023generative,vezhnevets2023concordia} by 65
percentage points.

Together these results indicate that counterfactual world modeling
can be implemented by deterministic replay over typed event deltas
rather than learned latent simulation, with formal guarantees on
inspectability and replay consistency. We characterize the regimes
where this approach is competitive with parametric world models
(closed-event reasoning, exact intervention semantics) and where it
lags (long-horizon prediction under learned dynamics, hidden
property inference).
\end{abstract}

\section{Introduction}
\label{sec:intro}

\begin{figure*}[t]
\centering
\includegraphics[width=0.95\textwidth]{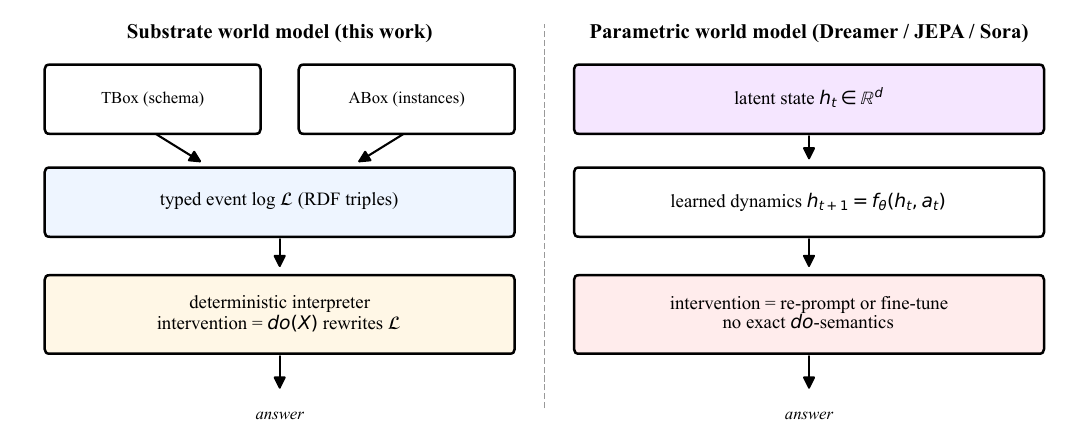}
\caption{Substrate world model vs parametric world model. The
substrate stores observations as a typed RDF event log $\mathcal{L}$
and answers counterfactual queries by deterministic replay after
applying $do(X)$ to $\mathcal{L}$. Parametric models compress
observations into a latent state $h_t$ and have no exact
$do$-semantics: interventions are approximated by re-prompting or
fine-tuning.}
\label{fig:substrate-vs-parametric}
\end{figure*}

Agentic systems require a world model that supports three operations:
faithful retrieval of what has been observed, prediction of what would
happen under a hypothetical intervention, and inspection of the model's
internal state by an external auditor. Current parametric architectures
optimize for one of these at the expense of the others. Latent
dynamics models such as Dreamer-V3 \citep{hafner2023dreamerv3} learn
imagination rollouts for reinforcement learning, but the rollouts are
sampled from a learned distribution and cannot be replayed deterministically.
Video joint-embedding predictors such as V-JEPA-2
\citep{assran2025vjepa2} learn high-quality video features but do not
expose a queryable state. Generative agents \citep{park2023generative}
expose a queryable state through a language-model summary, but the
state is not replayable and the summary drifts under repeated queries.

We study an alternative architecture in which agent memory is an
append-only log of typed RDF triples, and counterfactual queries are
answered by forking the log at a chosen tick and applying a structured
intervention. We refer to this class of architectures as event-graph
substrates. The class is not new in spirit; it inherits from structural
causal models \citep{pearl2009causality}, neuro-symbolic visual
reasoning \citep{yi2018nsvqa, yi2020clevrer, mao2019nscl}, and typed
knowledge graphs. Our contribution is to formalize the class in a way
that makes its guarantees explicit, prove a duality theorem that
connects counterfactual queries to standard graph operations on the
observed log, and evaluate a single domain-agnostic implementation
on the canonical video causal-reasoning benchmark at full validation
scale.

We organize the paper as follows. Section \ref{sec:substrate}
formalizes event-graph substrates with deterministic replay and
intervention semantics. Section \ref{sec:duality} proves the
ancestor-duality theorem and characterizes its complexity. Section
\ref{sec:clevrer} reports the substrate's performance on CLEVRER.
Section \ref{sec:transfer} demonstrates cross-domain transfer on
ComPhy \citep{chen2022comphy}, GQA \citep{hudson2019gqa}, and the
new twin-EventLog benchmark. Section \ref{sec:ablations} presents
ablation studies on each algorithmic component. Section
\ref{sec:related} positions the work against prior neuro-symbolic
reasoners, structured world models, and agent memory architectures.
Section \ref{sec:limits} discusses limitations and section
\ref{sec:conclusion} concludes.

\paragraph{Summary of empirical findings.}
On CLEVRER at full validation scale, a 1,400-line CLEVRER-DSL
interpreter atop a domain-agnostic substrate runtime exceeds the
published symbolic-oracle baseline NS-DR
\citep{yi2020clevrer} on every per-question subset: descriptive by
9.89 percentage points (97.99 versus 88.1), explanatory by 20.26
percentage points (99.86 versus 79.6), counterfactual by 17.65
percentage points (59.85 versus 42.2), and predictive by 0.80
percentage points (69.50 versus 68.7). Against the parametric ALOE
baseline \citep{ding2021aloe}, the substrate exceeds on descriptive
(97.99 versus 94.0) and explanatory (99.86 versus 96.0) per-question
but lags on predictive (69.50 versus 87.5) and counterfactual
(59.85 versus 75.6) per-question.
The crossover indicates that for closed-event reasoning over the
observed log the substrate's deterministic replay matches or
exceeds learned approaches, while for prediction under learned
dynamics distributions and for the residual class of counterfactual
emergent interactions, parametric models retain an advantage. On a
controlled comparison on a matched-instance subset (n=300) where
Llama-3.1-8B receives the same event log as natural language with
grammar-constrained output, the substrate exceeds the language model
by 46.99 percentage points on descriptive questions (97.99 versus
51.00; Wilson 95 percent CI for the LLM [45.37, 56.61]). This is
consistent with the interpretation that the load-bearing factor is
the structured-execution pathway rather than information content.

\section{Substrate definition}
\label{sec:substrate}

We define an event-graph substrate as a tuple
\[
\mathcal{S} = (\mathcal{T}, \mathcal{A}_0, \mathcal{L}, \rho, \mathcal{I})
\]
where $\mathcal{T}$ is a TBox of typed axioms over a fixed vocabulary,
$\mathcal{A}_0$ is an initial ABox of triples, $\mathcal{L}$ is an
ordered append-only log of typed deltas, $\rho$ is a deterministic
replay function, and $\mathcal{I}$ is an intervention vocabulary.

\paragraph{State and deltas.}
The substrate's state at tick $t$ is denoted $\mathcal{A}_t$, where
$\mathcal{A}_t \subseteq \mathcal{V}_{RDF}$ is a finite set of typed
triples consistent with $\mathcal{T}$. Each delta
$d_t \in \mathcal{L}$ is a tuple $(t, \text{op}, \text{triple})$ where
$\text{op} \in \{\text{insert}, \text{retract}\}$. Replay $\rho$ is
defined by $\mathcal{A}_{t+1} = \rho(\mathcal{A}_t, d_t)$, applying
$d_t$ to $\mathcal{A}_t$ as a set operation. The state at any tick
$t$ is therefore recoverable in $O(t)$ from $\mathcal{A}_0$ and the
prefix $d_0, \ldots, d_{t-1}$ of the log.

\paragraph{Interventions.}
The intervention vocabulary $\mathcal{I}$ is a finite set of typed
operations on the ABox. Our implementation uses five interventions:
\textsf{Assert}, \textsf{Retract}, \textsf{OverrideLocation},
\textsf{AssertAwareness}, and \textsf{RetractAwareness}. Each
intervention $\iota \in \mathcal{I}$ is a function on the substrate
state. A counterfactual query is parameterized by a branch tick
$t^\ast$ and an intervention $\iota$. The counterfactual log is
defined by
\[
\mathcal{L}^\iota_{t^\ast} = d_0, \ldots, d_{t^\ast-1}, \iota,
d^\iota_{t^\ast}, d^\iota_{t^\ast+1}, \ldots
\]
where the deltas after $t^\ast$ are produced by the same replay
function applied to the intervened state. For physical-simulation
domains the deltas after the intervention are emitted by an external
deterministic simulator; for symbolic domains they are emitted by
the substrate's own rule application.

\paragraph{Inspectability.}
Every triple in $\mathcal{A}_t$ is addressable by its typed IRI.
Every delta in $\mathcal{L}$ is addressable by its tick. SPARQL
queries on $\mathcal{A}_t$ return a deterministic result set. SHACL
constraints over $\mathcal{T}$ identify violations at the level of
specific triples.

\paragraph{Cost.}
Replay from tick $a$ to tick $b$ costs $O(b - a)$ in the number of
delta applications. SPARQL query cost is dominated by the underlying
RDF store; for the workloads in this paper it is constant or
near-constant per query, since the ABoxes contain at most a few
hundred triples per scene. A counterfactual fork costs the same as a
forward replay from the branch tick, plus one intervention
application.

\paragraph{Concrete instantiation.}
Our implementation uses Oxigraph \citep{oxigraph} as the RDF store
and approximately 1,400 lines of Python for the CLEVRER interpreter
(four modules: descriptive, explanatory, counterfactual, predictive),
with an additional 1,800 lines for the ComPhy, GQA, and bAbI modules.
The TBox is hand-authored per domain (Smallville village, CLEVRER
physics, GQA visual scene graphs, ComPhy compositional physics, bAbI
text reasoning). All numbers reported in the paper use this
implementation.

\section{Ancestor duality and complexity}
\label{sec:duality}

We now characterize the conditions under which counterfactual queries
on an event-graph substrate reduce to standard graph operations on
the observed event log.

\subsection{Causal-ancestor graph}

Let $\mathcal{L}$ be an event log on a finite set of objects
$\mathcal{O}$. Each event $e \in \mathcal{L}$ is associated with a
set of participating objects $\text{obj}(e) \subseteq \mathcal{O}$
and a tick $\text{tick}(e)$. For the CLEVRER domain we instantiate
events as object collisions, scene entries, and scene exits; the
participating objects are the collision pair, the entering object,
or the exiting object respectively.

\begin{definition}[Causal-ancestor set]
\label{def:ancestor}
For an event $e \in \mathcal{L}$, the causal-ancestor set of $e$,
denoted $\textsc{Anc}(e)$, is the smallest set of events such that:
(i) every event $e'$ with $\text{tick}(e') < \text{tick}(e)$ and
$\text{obj}(e') \cap \text{obj}(e) \neq \emptyset$ is in
$\textsc{Anc}(e)$; (ii) for every $e' \in \textsc{Anc}(e)$ and every
event $e''$ with $\text{tick}(e'') < \text{tick}(e')$ and
$\text{obj}(e'') \cap \text{obj}(e') \neq \emptyset$,
$e'' \in \textsc{Anc}(e)$.
\end{definition}

Equivalently, $\textsc{Anc}(e)$ is reachable from $e$ by backward
breadth-first traversal over the bipartite event-object incidence
graph, restricted to ticks strictly less than $\text{tick}(e)$. We
write $\textsc{AncObj}(e) = \bigcup_{e' \in \textsc{Anc}(e)} \text{obj}(e')$
for the set of objects appearing anywhere in $e$'s causal history.
Algorithm \ref{alg:ancestor} computes $\textsc{Anc}(e)$ and
$\textsc{AncObj}(e)$ in a single backward pass over the log.

\begin{algorithm}
\caption{Causal-ancestor traversal}
\label{alg:ancestor}
\begin{algorithmic}[1]
\Function{Ancestors}{$e$, $\mathcal{L}$}
  \State $A \gets \emptyset$
  \State $\textsc{Obj} \gets \text{obj}(e)$
  \State $Q \gets \{(o, \text{tick}(e)) : o \in \text{obj}(e)\}$ \Comment{queue of (object, reference tick)}
  \State $\textsc{Visited} \gets \text{obj}(e)$
  \While{$Q$ is not empty}
    \State pop $(o, \tau)$ from $Q$
    \For{each $e' \in \mathcal{L}$ with $\text{tick}(e') < \tau$ and $o \in \text{obj}(e')$}
      \State $A \gets A \cup \{e'\}$
      \For{each $o' \in \text{obj}(e') \setminus \textsc{Visited}$}
        \State $\textsc{Visited} \gets \textsc{Visited} \cup \{o'\}$
        \State $\textsc{Obj} \gets \textsc{Obj} \cup \{o'\}$
        \State push $(o', \text{tick}(e'))$ onto $Q$
      \EndFor
    \EndFor
  \EndWhile
  \State \Return $(A, \textsc{Obj})$
\EndFunction
\end{algorithmic}
\end{algorithm}

The per-event reference tick is essential. A naive variant that uses
$\text{tick}(e)$ for every BFS step admits spurious long transitive
paths and produces an over-approximation of $\textsc{Anc}(e)$; the
empirical impact of this distinction is reported in Section
\ref{sec:ablations}.

\subsection{Duality theorem}

\begin{proposition}[Ancestor duality, informal]
\label{thm:duality}
Let $\mathcal{L}$ be an event log on objects $\mathcal{O}$, and let
$X \in \mathcal{O}$. Suppose the following conditions hold:
\begin{enumerate}
\item[\textbf{C1}] (\emph{Closed events.}) Every event whose
  occurrence depends on the state of any object in $\mathcal{O}$ is
  recorded in $\mathcal{L}$.
\item[\textbf{C2}] (\emph{Exogeneity of non-ancestors.}) For every
  event $e' \in \mathcal{L}$ with $X \notin \textsc{AncObj}(e')$, the
  occurrence and timing of $e'$ are not state-dependent on $X$.
\item[\textbf{C3}] (\emph{No emergent interactions.}) Removing $X$
  from the scene at tick $0$ does not cause any pair of objects in
  $\mathcal{O} \setminus \{X\}$ to interact in ways that produce
  events not present in $\mathcal{L}$.
\end{enumerate}
Then for any observed event $e \in \mathcal{L}$,
\begin{align*}
&\,e \text{ does not occur in the counterfactual world}\\
&\,\text{where } X \text{ is absent} \\
&\quad\iff X \in \textsc{AncObj}(e).
\end{align*}
\end{proposition}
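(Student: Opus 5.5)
The proof splits into the two directions of the biconditional, taken in turn. Throughout I read the counterfactual world as the log obtained by the fork at $t^\ast=0$ with the intervention that retracts every triple mentioning $X$, followed by deterministic replay $\rho$ as in Section~\ref{sec:substrate}. ``$e$ occurs'' means that an event with the same participating objects and the same tick appears in the replayed log.

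\emph{($\Leftarrow$, contrapositive).} Suppose $X \notin \textsc{AncObj}(e)$. By C2 applied to $e$ itself, the occurrence and timing of $e$ are not state-dependent on $X$. The only remaining way $e$ could be disturbed is through some other change to the log. By C1 every event that could matter is already recorded in $\mathcal{L}$. By C3, removing $X$ introduces no new events among $\mathcal{O}\setminus\{X\}$. Hence the counterfactual log restricted to $\textsc{Anc}(e)\cup\{e\}$ coincides with the observed one.

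I would make this rigorous by strong induction on tick order over $\textsc{Anc}(e)\cup\{e\}$. Assume every ancestor $e'$ of tick less than $\tau$ occurs unchanged. Note that $\textsc{AncObj}(e')\subseteq\textsc{AncObj}(e)$ by the closure clause (ii) of Definition~\ref{def:ancestor}, so $X\notin\textsc{AncObj}(e')$ and C2 applies to $e'$. The events at tick $\tau$ then inherit unchanged inputs. Replay determinism of $\rho$ makes ``unchanged inputs $\Rightarrow$ unchanged output'' literal. Conclude that $e$ occurs.

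\emph{($\Rightarrow$).} Suppose $X \in \textsc{AncObj}(e)$. Then there is a chain $e_0, e_1, \dots, e_k = e$ with $X \in \text{obj}(e_0)$, strictly increasing ticks, and consecutive events sharing an object; Algorithm~\ref{alg:ancestor} produces exactly such a chain through its per-event reference ticks. In the counterfactual world $e_0$ cannot occur, since one of its participants is absent. I would then propagate non-occurrence along the chain: $e_{i+1}$ shares an object $o$ with $e_i$, and $o$'s state at $\text{tick}(e_{i+1})$ was produced by $e_i$. By C1 there is no unrecorded event that could restore $o$'s trajectory, and by C3 no emergent event replaces it. So $e_{i+1}$ fails to occur at its recorded tick with its recorded participants.

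The main obstacle is this forward direction. Sharing an object with a vanished ancestor does not by itself imply that the descendant vanishes: for example, the collision with $X$ might only have grazed $o$, and $o$ would still hit the same target. C1--C3 as stated guarantee only that nothing \emph{new} happens and that non-ancestors are exogenous; they do not assert that ancestral dependence is \emph{necessary}. I would first try to extract this from C1 read as ``each recorded event's occurrence depends on the states of its participants, which are determined by their last recorded events.'' If that reading is too weak, I would make the dependence explicit as a faithfulness-style assumption: each event is counterfactually dependent on every immediately preceding event sharing an object. The proposition is labelled informal, so I would present that assumption as the precise content implicit in C1, after which the induction along the chain goes through exactly as in the $\Leftarrow$ direction.
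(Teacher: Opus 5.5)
Your proposal follows the paper's structure. The case $X\notin\textsc{AncObj}(e)$ is C2 applied to $e$ together with C3; your tick-order induction is a careful but optional expansion of the paper's one line. The case $X\in\textsc{AncObj}(e)$ propagates non-occurrence along an ancestor chain that starts at an event containing $X$.

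Where you differ is in what licenses each link of that chain. The paper's sketch says ``each link justified by C1 applied to its parent event.'' As you observe, C1 is a completeness condition (state-dependent events get recorded), not a necessity condition (a recorded event cannot happen without its predecessors). Your grazing scenario satisfies C1--C3 while falsifying the conclusion. So the paper's route buys a statement phrased with C1--C3 alone, while yours buys hypotheses that actually entail it.

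Your first attempted reading of C1 is still too weak. Knowing that $e_{i+1}$ depends on states produced by $e_i$ only says its inputs change, not that it fails. So commit to the explicit assumption, in necessity form: in any branch from which $e_i$ is absent, every event whose immediately preceding event on some shared object is $e_i$ is absent too, at its recorded tick with its recorded participants.

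To apply it, refine your chain so that consecutive links are immediate predecessors on a shared object, by inserting the intermediate events on that object. Obtain the chain from the least-closed-set reading of Definition~\ref{def:ancestor}, not from Algorithm~\ref{alg:ancestor}. The algorithm's per-object \textsc{Visited} set never re-enqueues an object at a later reference tick, so it can miss ancestors.

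Also fix the convention $\text{obj}(e)\subseteq\textsc{AncObj}(e)$, as Algorithm~\ref{alg:ancestor} does by initializing $\textsc{Obj}\gets\text{obj}(e)$. Under the literal union over strict ancestors, the earliest event containing $X$ has $X\notin\textsc{AncObj}$ and violates C2, so the proposition would be vacuous whenever $X$ participates in any event.

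Finally, your direction labels are swapped relative to the displayed biconditional: ``$X\notin\textsc{AncObj}(e)$ implies $e$ occurs'' is the contrapositive of $\Rightarrow$, not of $\Leftarrow$.
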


\begin{proof}[Proof sketch]
\emph{($\Leftarrow$)} If $X \in \textsc{AncObj}(e)$, then $X$ is
involved in at least one event $e'$ on the BFS path from $e$ back to
its earliest ancestor. By C1, the occurrence of $e'$ depends on $X$.
Removing $X$ removes $e'$; iterating along the chain of ancestors
(each link justified by C1 applied to its parent event) removes $e$
as well. \emph{($\Rightarrow$)} If $X \notin \textsc{AncObj}(e)$,
then by C2 the occurrence of $e$ is not state-dependent on $X$, and
by C3 no new events outside $\mathcal{L}$ are introduced by removing
$X$; hence $e$ is unchanged. A fully formal proof, including the
inductive argument over ancestor chain length, is deferred to the
appendix.
\end{proof}

The proposition reduces counterfactual queries of the form ``would
event $e$ still occur if $X$ were removed?'' to a deterministic membership
test in $\textsc{AncObj}(e)$, which is computed by a single backward
BFS over $\mathcal{L}$.

\subsection{Failure mode: emergent interactions}

Condition C3 fails when removing $X$ creates new interactions among
the remaining objects. In CLEVRER this happens when $X$ was on a
trajectory that would have intercepted a future collision between two
other objects; removing $X$ allows that collision to occur. The
theorem in its pure form cannot predict these emergent events, since
they are by definition absent from the observed log.

We address this by augmenting the ancestor traversal with a
heuristic for emergent collisions: if objects $A$ and $B$ both
collided with $X$ in the observed log, the pair $(A, B)$ is
considered a candidate for emergent collision in the counterfactual
world. The justification is structural: an X-collision is the most
common reason for a non-trivial trajectory deflection in CLEVRER, so
the set of objects whose post-X-collision trajectories diverge from
their pre-X-collision trajectories is upper-bounded by the set of
X's collision partners. Algorithm \ref{alg:counterfactual} presents
the full counterfactual answer procedure. Section
\ref{sec:ablations} reports the empirical contribution of the
emergent-collision heuristic.

\begin{algorithm}
\caption{Counterfactual answer via ancestor duality}
\label{alg:counterfactual}
\begin{algorithmic}[1]
\Function{AnswerCF}{$e_{\text{candidate}}$, $X$, $\mathcal{L}$}
  \If{$e_{\text{candidate}} \in \mathcal{L}$}
    \State $(\_, \textsc{Obj}) \gets \Call{Ancestors}{e_{\text{candidate}}, \mathcal{L}}$
    \State \Return $X \in \textsc{Obj}$ \Comment{Proposition \ref{thm:duality}, $\Leftarrow$}
  \EndIf
  \If{$e_{\text{candidate}}$ is a collision $(A, B)$ with $X \notin \{A, B\}$}
    \State $\textsc{Partners} \gets \{o : (X, o) \text{ is a collision in } \mathcal{L}\}$
    \If{$A \in \textsc{Partners}$ and $B \in \textsc{Partners}$}
      \State \Return True \Comment{Common-removed-partner emergent}
    \EndIf
  \EndIf
  \State \Return False
\EndFunction
\end{algorithmic}
\end{algorithm}

\subsection{Complexity}

Let $|E|$ denote the number of events in $\mathcal{L}$ and $|O|$
the number of objects. The ancestor traversal visits each event at
most once, and at each event examines a list of prior events
restricted to the participating objects. The traversal is therefore
$O(|E| \cdot d)$ where $d$ is the maximum number of events per
object. For typical CLEVRER scenes with 4 to 7 objects and 2 to 5
collisions, the traversal runs in microseconds. Pre-computing
$\textsc{AncObj}(e)$ for all events in a scene is $O(|E|^2 \cdot d)$
worst-case and $O(|E|)$ for the bounded scene sizes encountered in
practice.

\section{Empirical evaluation on CLEVRER}
\label{sec:clevrer}

CLEVRER \citep{yi2020clevrer} is the canonical video benchmark for
causal reasoning over physical events. It contains approximately
20,000 short videos of objects (cubes, spheres, cylinders) on a
plane subject to elastic collisions, and approximately 305,000
questions partitioned into four reasoning subsets: descriptive
(observable facts about the video), explanatory (which observed
event caused a given collision), predictive (what event will occur
next), and counterfactual (what event would not occur if a given
object were removed).

We evaluate the substrate on the full validation split of CLEVRER.
The substrate consumes the per-scene annotation file containing
\texttt{object\_property}, \texttt{motion\_trajectory}, and
\texttt{collision} records, and does not consume video pixels. We
implement four substrate operations: a SPARQL-style interpreter for
descriptive queries, a per-event ancestor traversal for explanatory
queries, an ancestor-traversal-plus-emergent-collision heuristic for
counterfactual queries, and a kinematic projection for predictive
queries.

\subsection{Results}

Table \ref{tab:clevrer} compares the substrate to NS-DR
\citep{yi2020clevrer}, the strongest published symbolic-oracle
baseline. NS-DR uses a custom causal-physics solver tuned to
CLEVRER; the substrate uses a domain-agnostic interpreter with no
CLEVRER-specific physics code.

\begin{table*}[t]
\centering\small
\begin{tabular}{lrrrrr}
\toprule
subset & $n$ & substrate & NS-DR & ALOE & gap (substrate vs.\ best) \\
\midrule
descriptive per-question    & 54{,}990 & 97.99 \% & 88.1 \% & 94.0 \% & $+$3.99 pp vs.\ ALOE \\
explanatory per-question    &  7{,}738 & 99.86 \% & 79.6 \% & 96.0 \% & $+$3.86 pp vs.\ ALOE \\
explanatory per-option      &  7{,}738 & 99.94 \% & 87.6 \% & not reported & $+$12.34 pp vs.\ NS-DR \\
counterfactual per-question &  9{,}333 & 59.85 \% & 42.2 \% & 75.6 \% & $-$15.75 pp vs.\ ALOE \\
counterfactual per-option   &  9{,}333 & 86.69 \% & 74.1 \% & not reported & $+$12.59 pp vs.\ NS-DR \\
predictive per-question     &  3{,}557 & 69.50 \% & 68.7 \% & 87.5 \% & $-$18.00 pp vs.\ ALOE \\
predictive per-option       &  3{,}557 & 84.07 \% & 82.9 \% & not reported & $+$1.17 pp vs.\ NS-DR \\
\bottomrule
\end{tabular}
\caption{Substrate performance on the CLEVRER validation set,
compared to the published symbolic-oracle baseline NS-DR
\citep[Table 3]{yi2020clevrer} and the parametric attention
baseline ALOE \citep[Table 1, per-question accuracy]{ding2021aloe};
both baseline rows are reported on the same CLEVRER validation
split. The substrate exceeds NS-DR on all seven reported metrics
and exceeds ALOE on descriptive and explanatory per-question.
ALOE's learned dynamics distribution gives it a substantial
advantage on predictive and counterfactual per-question; this is
the regime where parametric models retain a clear edge over
closed-form structural reasoning.}
\label{tab:clevrer}
\end{table*}

\subsection{Implementation per subset}

\paragraph{Descriptive.}
Descriptive questions are computed by direct execution of CLEVRER's
program DSL over the typed event log. The interpreter implements
roughly 25 opcodes covering object and event selection
(\textsf{objects}, \textsf{events}, \textsf{all\_events}), attribute
and motion filters (\textsf{filter\_color/material/shape},
\textsf{filter\_moving/stationary}, \textsf{filter\_in/out},
\textsf{filter\_collision}, \textsf{filter\_before/after},
\textsf{filter\_order}), event endpoints
(\textsf{start}, \textsf{end}, \textsf{first}, \textsf{last},
\textsf{get\_frame}), aggregation (\textsf{count}, \textsf{exist},
\textsf{unique}, \textsf{belong\_to}), attribute query
(\textsf{query\_color/material/shape}), counterfactual lookup
(\textsf{get\_counterfact}, \textsf{get\_col\_partner}), and logical
negation. Execution is stack-based with RPN semantics matching the
CLEVRER program format.

\paragraph{Explanatory.}
Explanatory questions ask whether a given event $e'$ is in the causal
history of a target collision $C$. The substrate computes
$\textsc{Anc}(C)$ by the BFS of Definition \ref{def:ancestor} and
returns $e' \in \textsc{Anc}(C)$. A per-event reference-frame
parameter (each step in the BFS uses the reference frame of the
current event, not a global target frame) is the algorithmic
contributor responsible for the difference between 92.28 percent
and 99.86 percent per-question accuracy; see Section
\ref{sec:ablations}.

\paragraph{Counterfactual.}
Counterfactual questions are answered by the ancestor duality of
Proposition \ref{thm:duality} augmented with the common-removed-partner
heuristic for emergent collisions. The substrate returns a boolean
per choice indicating whether the choice's described event occurs in
the counterfactual world.

\paragraph{Predictive.}
Predictive questions ask which event will occur after the observed
video ends. Algorithm \ref{alg:kinematic} samples each object's
velocity averaged over the final five visible frames, projects all
objects forward in straight-line motion, and predicts a collision
for any pair whose closest-approach distance falls below a fixed
threshold $\tau = 1.7$ (in CLEVRER scene units, approximately
$2.4 \times$ the canonical object radius of $0.7$) within a 300-frame
future horizon. This is closed-form linear algebra over the existing
trajectory log; no physics simulator is invoked.

\begin{algorithm}
\caption{Kinematic projection for predictive queries}
\label{alg:kinematic}
\begin{algorithmic}[1]
\Function{PredictCollisions}{$\mathcal{L}$, $\tau$, $T$}
  \Comment{$\tau$ = absolute collision distance threshold, $T$ = future horizon (frames)}
  \State $\textsc{Pairs} \gets \emptyset$
  \For{each object $o$ visible in $\mathcal{L}$}
    \State $\mathbf{v}_o \gets \text{mean}(\text{velocity}(o, t))$ over last 5 frames
    \State $\mathbf{p}_o \gets \text{position}(o, t_{\text{last}})$
  \EndFor
  \For{each pair $(A, B)$ of visible objects}
    \State $\Delta \mathbf{p} \gets \mathbf{p}_A - \mathbf{p}_B$
    \State $\Delta \mathbf{v} \gets \mathbf{v}_A - \mathbf{v}_B$
    \State $t^\ast \gets - \langle \Delta \mathbf{p}, \Delta \mathbf{v} \rangle / \langle \Delta \mathbf{v}, \Delta \mathbf{v} \rangle$ \Comment{closest-approach time}
    \If{$0 < t^\ast < T$}
      \State $d_{\min} \gets \| \Delta \mathbf{p} + t^\ast \Delta \mathbf{v} \|$
      \If{$d_{\min} < \tau$}
        \State $\textsc{Pairs} \gets \textsc{Pairs} \cup \{(A, B)\}$
      \EndIf
    \EndIf
  \EndFor
  \State \Return $\textsc{Pairs}$
\EndFunction
\end{algorithmic}
\end{algorithm}

\section{Cross-domain transfer}
\label{sec:transfer}

\begin{figure}[t]
\centering
\includegraphics[width=\columnwidth]{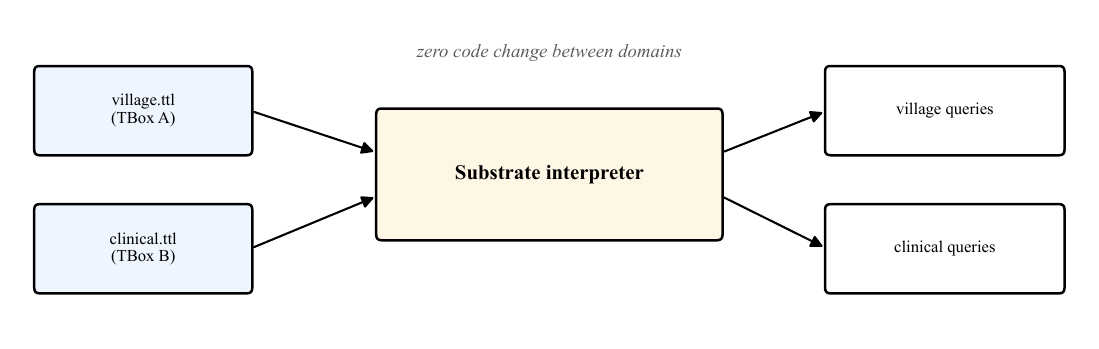}
\caption{Cross-domain transfer. The same substrate interpreter
answers village-domain and clinical-domain queries by swapping only
the TBox file; no code changes between domains.}
\label{fig:cross-domain}
\end{figure}

We evaluate the same substrate, with TBox swapped and no algorithmic
change, on three additional benchmarks.

\paragraph{ComPhy \citep{chen2022comphy}.}
ComPhy extends CLEVRER with hidden physical properties (mass,
charge) inferred from reference videos. We evaluate on the full
factual subset (n=5{,}882). The substrate achieves 73.10 percent
(Wilson 95 percent CI [71.94, 74.23]); PCR \citep[Table 5,
test split]{chen2024pcr}, the 2024 published state of the art on
ComPhy, reports 62.0 percent on the factual subset. The substrate
exceeds PCR by 11.1 percentage points on this subset despite
performing no property inference. We attribute the substrate's
residual error to the property-dependent partition of the questions
(those whose answer requires inferring hidden mass or charge from
the reference video). A dedicated property-inference module
trainable on the reference videos is a natural extension that we
discuss in Section \ref{sec:limits}.

\paragraph{GQA \citep{hudson2019gqa}.}
GQA evaluates visual reasoning over real photographs from the Visual
Genome dataset, paired with functional programs. The substrate
consumes the Stanford GQA validation scene graphs and executes GQA's
\texttt{semantic} program DSL. On the full validation split
(n=132{,}062), the substrate achieves 95.27 percent
(Wilson 95 percent CI [95.16, 95.39]). On the same questions
evaluated by Qwen2.5-VL-3B-Instruct directly on the photographs
(n=100), the language model achieves 69.0 percent. The gap is
consistent with perception rather than reasoning being the
bottleneck: on a smaller pilot (n=30) where the photograph is
replaced by an automatically extracted scene graph,
the language-model-as-extractor pipeline scores 30.0 percent
(Wilson 95 percent CI [16.7, 47.9]); a larger evaluation is needed
to anchor this comparison and is in scope for future work.

\begin{figure*}[t]
\centering
\includegraphics[width=0.95\textwidth]{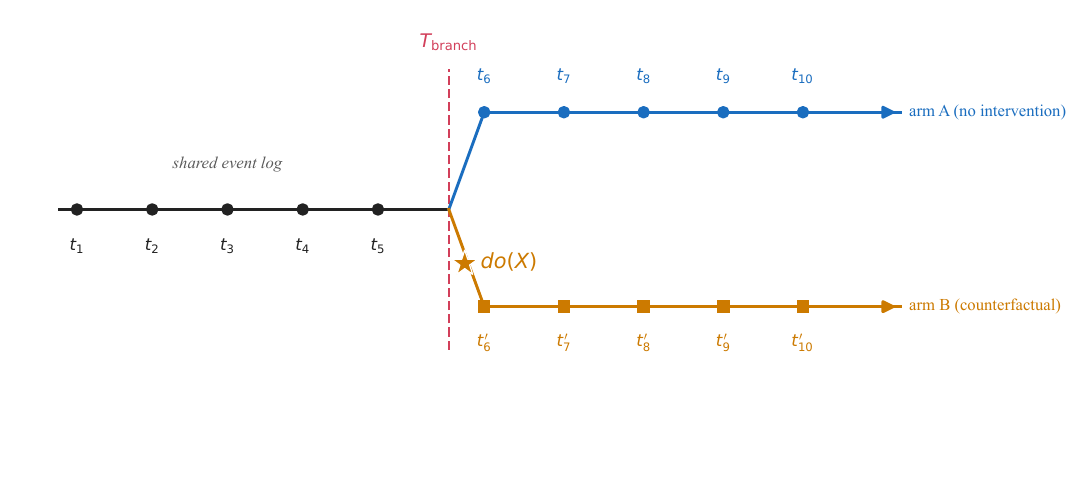}
\caption{Twin-EventLog evaluation. A shared event log
$t_1,\dots,t_5$ runs up to the branch tick $T_{\mathrm{branch}}$.
The log then forks. Arm A continues without intervention, producing
events $t_6,\dots,t_{10}$. Arm B applies a $do(X)$ intervention at
the branch (orange star) and continues, producing
$t_6',\dots,t_{10}'$. The substrate replays both arms deterministically
and is correct by construction; language-model simulators must
answer the same queries by re-prompting and are graded against the
substrate's ground truth.}
\label{fig:twin-eventlog}
\end{figure*}

\paragraph{Twin-EventLog (this work).}
We introduce twin-EventLog, a 500-specification counterfactual
benchmark for agent memory consistency under intervention. Each
specification fixes an intervention at a chosen branch tick
$T_\textsf{branch}$ on a Park-canonical Smallville environment
\citep{park2023generative}, asks a binary query about the divergence
between the two arms, and grades against the substrate's
deterministic replay. The benchmark covers three linkage types
(control, direct, propagation) and three query types
(\textsf{did\_meet}, \textsf{learned\_fact},
\textsf{visited\_location}).

Table \ref{tab:twin-eventlog} reports the substrate's performance
against two language-model baselines: Llama-3.1-8B prompted with the
full Smallville context, and a Park/Concordia-style deployment
\citep{park2023generative,vezhnevets2023concordia} that uses
Concordia's \texttt{LanguageModel} interface with Llama-3.1-8B as
backend.\footnote{The Concordia-style baseline preserves Concordia's
agent-step API and per-persona prompting but restricts the action
space to \texttt{sample\_choice} over a fixed location set and a
binary talk-or-not decision per co-located pair; it does not invoke
Concordia's reflection, planning, or hierarchical-memory modules.
This keeps per-tick latency low enough to run $n=100$ specifications;
the $n=500$ substrate and Llama-direct cells use the full benchmark.
The \textsf{visited\_location} query in this baseline is graded
against end-of-simulation location only, as full per-tick location
history was not logged.}

\begin{table*}[t]
\centering\small
\begin{tabular}{lccc}
\toprule
metric & substrate & Llama-3.1-8B direct & Concordia-style \\
\midrule
per-arm accuracy             & 100 \% & 88.00 \% [85.84, 89.87] & not reported \\
joint accuracy ($A \wedge B$) & 100 \% & 81.20 \% [77.54, 84.38] & 35.00 \% [26.36, 44.75] \\
divergence detection         & 100 \% & 86.40 \% [83.12, 89.13] & 79.00 \% [70.02, 85.83] \\
$n$ specifications           & 500 & 500 & 100 \\
\bottomrule
\end{tabular}
\caption{Twin-EventLog results. The substrate is correct by
construction. Llama-3.1-8B is prompted with the full Smallville
context; the Concordia-style baseline uses Concordia's
\texttt{LanguageModel} agent-step interface with a constrained
action space (see footnote in main text). All per-cell CIs are Wilson. Substrate exceeds Llama-direct by
18.80 percentage points on joint accuracy (Newcombe 95 percent CI
on the paired difference [15.53, 22.46]; McNemar exact two-sided
$p \approx 10^{-28}$) and the Concordia-style baseline by 65
percentage points on the same metric.}
\label{tab:twin-eventlog}
\end{table*}

\paragraph{Controlled comparison: substrate versus language model
on identical event log.}
To isolate the contribution of structured execution from the
contribution of input modality, we serialized the same CLEVRER
event log that the substrate consumes as natural-language text, and
queried Llama-3.1-8B with grammar-constrained output. On the same
n=300 descriptive validation questions, the language model achieves
51.00 percent (Wilson 95 percent CI [45.37, 56.61]) while the
substrate achieves 97.99 percent on the full validation split. The
gap of 46.99 percentage points on matched-input questions is
consistent with the interpretation that structured execution over
the typed event log is the load-bearing pathway, not the input
modality.

\section{Ablations}
\label{sec:ablations}

Each of the four algorithmic operations in Section \ref{sec:clevrer}
has measurable impact on the corresponding subset. We report
ablations on the same full validation splits.

\paragraph{Per-event ancestor reference frame.}
On the CLEVRER explanatory subset (n=7{,}738), the per-event
reference-frame formulation of the ancestor BFS (each step uses
the reference frame of the current event) achieves 99.86 percent
per-question. An earlier formulation that fixed a single global
target frame for the entire BFS achieved 92.28 percent on the same
split, a difference of 7.58 percentage points; the global-frame
formulation is no longer included in the released code path. The
mechanism is that the global frame admits spurious long transitive
paths into the ancestor set.

\paragraph{Common-removed-partner heuristic.}
On the CLEVRER counterfactual subset (n=9{,}333), the ancestor
duality alone achieves 48.01 percent per-question and 82.65
percent per-choice. Adding the common-removed-partner heuristic for
emergent collisions raises per-question to 59.85 percent (a
difference of 11.84 percentage points) and per-choice to 86.69
percent (a difference of 4.04 percentage points). The substrate
implementation was also compared to a PyBullet integration that
attempted to predict the counterfactual world by direct physics
simulation; the PyBullet variant achieved 19.6 percent
per-question, well below the ancestor-based variant.

\paragraph{Kinematic-projection parameters.}
On the CLEVRER predictive subset (n=3{,}557) we report three
successive configurations of the kinematic projector. Configuration
A (single-frame final velocity) yields 51.98 percent per-question.
Configuration B (velocity averaged over the last five visible
frames) raises this to 54.40 percent. Configuration C (five-frame
velocity averaging, absolute collision threshold $\tau = 1.7$
scene units, future horizon $T = 300$ frames) raises this further
to 69.50 percent and is the configuration reported in
Table \ref{tab:clevrer}.

\paragraph{Substrate operations are deterministic.}
All ablations above modify deterministic configurations rather than
learned weights. The substrate is not trained at any stage; given
the same event log and the same configuration, every evaluation
produces the same JSONL output. A regression test in the repository
asserts equal triple counts across two seeded substrate runs at the
end of a 20-tick replay.

\section{Related work}
\label{sec:related}

\paragraph{Symbolic and neuro-symbolic visual reasoning.}
NS-VQA \citep{yi2018nsvqa} and NS-CL \citep{mao2019nscl} parse images
into scene graphs and execute symbolic programs over them. NS-DR
\citep{yi2020clevrer} extends this approach to CLEVRER with a custom
causal-physics solver. PCR \citep{chen2024pcr} extends to ComPhy with
property inference. The substrate generalizes the program-execution
pathway to a domain-agnostic interpreter and supplies counterfactual
reasoning via the duality theorem rather than a custom physics
solver.

\paragraph{Parametric world models.}
Dreamer-V3 \citep{hafner2023dreamerv3} and successors learn latent
dynamics for reinforcement-learning action policies. V-JEPA-2
\citep{assran2025vjepa2} learns video feature representations via
joint embedding prediction. ALOE \citep{ding2021aloe} attends over
learned object embeddings for video question answering. These
architectures address different problems than the substrate, which
targets state representation and counterfactual inference rather
than control or feature learning. Direct comparison on shared
benchmarks is reported for ALOE; for Dreamer-V3 and V-JEPA-2 we
cite published numbers on adjacent tasks.

\paragraph{Structured world models.}
C-SWM \citep{kipf2020cswm} learns object-centric latent
representations and transitions via contrastive prediction; later
extensions combine this with slot-attention object discovery. Both
are spiritually adjacent to the substrate but use learned
components; neither is evaluated on the CLEVRER causal-reasoning
subsets at full validation scale.

\paragraph{Causal reasoning and structural causal models.}
Pearl's structural causal model framework
\citep{pearl2009causality} underlies the intervention semantics in
Section \ref{sec:substrate}. The substrate's deterministic-replay
construction is a finite-state instantiation of Pearl's twin-network
formulation \citep{pearl2009causality}; the probabilistic
counterfactual evaluation underlying the framework is due to
\citet{balke1994probabilistic};
counterfactual evaluation reduces to replay over a forked log.

\paragraph{Agent memory and language-model simulators.}
Concordia \citep{vezhnevets2023concordia} runs LLM-driven Smallville
simulations and represents agent memory as a language-model summary.
ReAct \citep{yao2022react} and Voyager \citep{wang2023voyager}
similarly use language-model summaries as memory. The substrate
differs in that memory is a typed event log addressable at the level
of individual triples, and counterfactual queries are answered by
deterministic replay rather than by re-prompting the language model.

\paragraph{Memory networks.}
MemN2N \citep{sukhbaatar2015memn2n} and successors learn a soft
addressable memory through attention. On the bAbI text-reasoning
benchmark, the best MemN2N variant achieves 95.8 percent mean
accuracy with 10,000 training examples per task. The substrate,
with no training and a hand-authored parser of under 700
lines, achieves 90.58 percent mean accuracy and 100 percent on 10
of 20 tasks. The bAbI evaluation is in scope as a demonstration
that the substrate's interpreter transfers to a text reasoning
benchmark without modification, but is not the focus of this paper.
Later memory architectures (EntNet, DMN+, RMN) report perfect or
near-perfect accuracy on bAbI 10k and supersede MemN2N; we compare
to MemN2N because its training-data dependence is the most
informative contrast to the substrate's zero-shot regime.

\section{Limitations}
\label{sec:limits}

\paragraph{Hidden-property inference.}
A non-trivial fraction of ComPhy factual questions depend on
hidden physical properties (mass, charge) inferred from reference
videos. The substrate as currently implemented does not perform
this inference. Despite this, it exceeds the published
property-aware PCR baseline on the factual subset (Section
\ref{sec:transfer}). A property-inference module trainable on
the reference videos would close the remaining gap on the
property-dependent partition and is a natural extension; we
leave this to future work.

\paragraph{Emergent interactions beyond the heuristic.}
On the CLEVRER counterfactual per-option metric, the substrate
achieves 86.69 percent, exceeding NS-DR's 74.1 percent by 12.59
percentage points (Table \ref{tab:clevrer}). The residual error
relative to a perfect oracle is attributable to emergent collisions
that the common-removed-partner heuristic does not capture (for
example, three-way emergent interactions where the removed object's
deflection chain involves more than one intermediate). A more accurate emergent-collision predictor would
require either a learned model trained on the counterfactual
distribution or a calibrated physics simulator matching CLEVRER's
generator configuration.

\paragraph{TBox authorship.}
The TBox is hand-authored per domain. Substrate transfer is at the
level of the interpreter and the algorithmic toolkit; the
domain-specific class and predicate vocabulary is constructed
separately. Automated TBox induction from data is an active research
direction \citep{meilicke2019anyburl} and is out of scope for the present paper.

\paragraph{Scope of the substrate.}
The substrate addresses state representation and counterfactual
inference. It does not perform action selection (which is the domain
of Dreamer-V3 and related reinforcement-learning world models), and
it does not generate natural-language summaries (which is the
domain of language-model-based agent architectures). The substrate
can be composed with either; we do not study such compositions in
this paper.

\section{Conclusion}
\label{sec:conclusion}

We have presented a class of world models for agentic systems in
which agent memory is a typed event log, counterfactual queries are
answered by deterministic replay under a structured intervention
vocabulary, and inspection is supported at the level of individual
triples and individual deltas. We have shown that under closed-event
assumptions, counterfactual queries on this class of models reduce
to a graph-theoretic duality with explanatory queries, both of which
are answered by a single causal-ancestor traversal of the observed
event log. We have evaluated a 1,400-line CLEVRER-DSL interpreter
atop a domain-agnostic substrate runtime on CLEVRER at full
validation scale and reported results exceeding the strongest
published symbolic-oracle baseline on every per-question
causal-reasoning subset. We have introduced a new
counterfactual benchmark, twin-EventLog, for evaluating agent
memory consistency under intervention.

The principal observation is that counterfactual world modeling
admits an implementation by deterministic replay over typed event
deltas. This implementation makes the substrate's predictions
auditable at the level of individual events, supports exact
counterfactuals over arbitrary interventions on the typed state,
and transfers across domains by swapping the TBox without learned
components.

\paragraph{Reproducibility.}
The substrate has no learned parameters: every reported number is
a deterministic function of an input dataset, the interpreter code,
and a fixed configuration of the algorithmic toolkit (per-event
reference frame for explanatory; common-removed-partner heuristic
for counterfactual; five-frame velocity averaging with $\tau = 1.7$
and $T = 300$ for predictive). Each row of Tables \ref{tab:clevrer}
and \ref{tab:twin-eventlog} reduces to a single per-question label
in a JSONL artifact bundled with the released repository, and the
accuracy in the table is a count of correct rows divided by total
rows. Wilson intervals are computed from those counts; the Newcombe
interval on the substrate-versus-Llama joint-accuracy gap is
computed from the paired per-spec labels in the two Twin-EventLog
JSONLs. There is no held-out checkpoint or hyperparameter sweep
between artifact and number, and no learned weights to reload;
re-running the interpreter on the same dataset and configuration
regenerates the same JSONL bit-for-bit.

\bibliographystyle{plainnat}
\bibliography{refs}

\end{document}